\documentclass[11pt,a4paper]{article}

\usepackage[comma]{natbib}
\usepackage{color}
\usepackage{graphicx}
\usepackage{pdfsync}

\usepackage{amsmath,amssymb,amsthm,amsopn}
\usepackage[ruled]{algorithm2e}

\usepackage[mathscr]{euscript}

\usepackage{pgfplots}

\pgfplotsset{compat=1.8}

\newtheorem{lemma}{Lemma}
\newtheorem{theorem}{Theorem}
\theoremstyle{remark}

\renewcommand{\today}{\begingroup
\number \day\space  \ifcase \month \or January\or February\or
March\or April\or May\or June\or July\or August\or September\or
October\or November\or December\fi \space  \number \year \endgroup}




\theoremstyle{plain}

\newtheorem{teor*}{Teorema}

\theoremstyle{definition}

\pagestyle{myheadings}

\setlength{\evensidemargin}{0.2in}%
\setlength{\textwidth}{6in}
\setlength{\topmargin}{-1cm}\setlength{\textheight}{8.8in}
\setlength{\oddsidemargin}{\evensidemargin} \hfuzz2pt

\title{Minimum adjusted Rand index for two clusterings\\ of a given size}

\author{Jos\'e E. Chac\'on\footnote{Departamento de
Matem\'aticas, Universidad de Extremadura, E-06006 Badajoz, Spain. E-mail:
{\tt jechacon@unex.es}} \ and Ana I. Rastrojo\footnote{Departamento de Matem\'aticas, IES Sierra La Calera, E-06150 Santa Marta de los Barros, Badajoz, Spain. E-mail: {\tt anabel.rastrojo@gmail.com}}}

\begin{document}

\maketitle

\begin{abstract}
\noindent The adjusted Rand index (ARI) is commonly used in cluster analysis to measure the degree of agreement between two data partitions. Since its introduction, exploring the situations of extreme agreement and disagreement under different circumstances has been a subject of interest, in order to achieve a better understanding of this index. Here, an explicit formula for the lowest possible value of the ARI for two clusterings of given sizes is shown, and moreover a specific pair of clusterings achieving such a bound is provided.
\end{abstract}

\medskip
\noindent {\em Keywords:} adjusted Rand index, external clustering evaluation, minimum agreement

\newpage

\section{Introduction}

The adjusted Rand index is one of the most commonly used similarity measures to compare two clusterings of a given set of objects. Indeed, it is the recommended criterion for external clustering evaluation in the seminal study of \cite{MC86}. Nevertheless, many other measures for external clustering evaluation were recently surveyed in \cite{M16}.

Initially, \cite{R71} considered a similarity index between two clusterings (the Rand index) defined as the proportion of object pairs that are either assigned to the same cluster in both clusterings or to different clusters in both clusterings. However, \cite{MA84} noted that such an index does not take into account the possible agreement by chance, and \cite{HA85} introduced a corrected-for-chance version of the Rand index, which is usually known as the adjusted Rand index (ARI).

Exploring the situations of extreme agreement, as measured by the ARI, has been a subject of interest since the very inception of this index. Indeed, \cite{HA85} posed the problem of finding the maximum ARI subject to given clustering marginals; i.e., when constrained to have fixed, given cluster sizes in each of the clusterings. Numerical algorithms to tackle this problem were developed initially by \cite{M92}, and later by \cite{BS08} and \cite{SHB15}, and an explicit solution for clusterings of size 2 has been recently shown in \cite{Ch20}.

A related but different problem concerns the obtention of lower bounds for the ARI of two clusterings of given sizes. According to \citet[][p. 631]{M16}, ``the lower bound is usually hard to calculate''. It should be noted that, due to the correction for chance, the ARI may take negative values for extremely discordant clusterings. This happens when the agreement between the two clusterings is less than the expected agreement when the clusters assignments are made at random, keeping the given marginals. Hence, finding the minimum possible ARI value allows quantifying how extreme is the discordance between two clusterings of given sizes.


Moreover, if the interest is to measure discordance instead of agreement, the ARI can be transformed into a semimetric by considering ${\rm ARD}=1-{\rm ARI}$ \citep{Ch19}. Thus, perfect agreement corresponds to null discordance, or ${\rm ARD}=0$, and the case of less agreement than random assignment is related to values of ${\rm ARD}>1$. However, in general, when dealing with semimetrics for measuring clustering disagreement it is useful to normalize them so that they take values in $[0,1]$; for instance, \cite{CDGH06} and \cite{M16} explored such a normalization for different distances between partitions. Therefore, obtaining the minimum ARI value makes it possible to define a normalized version of the ARD.

The main contribution of this paper is to find a lower bound for the ARI of two clusterings of given sizes, and to show that this bound is indeed the best possible one, since it is attained by an explicit pair of clusterings. More precise notation is introduced in Section \ref{sec:not}, where in addition the main result is rigourously stated. Two numerical examples showing the possible applications of this result are presented in Section \ref{sec:ex}, and the proofs of the main result and another auxiliar lemma of independent interest are given in Section \ref{sec:2}.

\section{Notation and main result}\label{sec:not}

A clustering of a set $\mathcal X$ of $n$ objects is a partition of $\mathcal X$ into non-empty, disjoint and exhaustive classes, called clusters. The number of such classes is known as the size of the clustering. Given two clusterings $\mathscr C=\{C_1,\dots,C_r\}$ and $\mathscr D=\{D_1,\dots,D_s\}$, of sizes $r$ and $s$, respectively, all the information regarding their concordance is registered in the $r\times s$ matrix ${\mathbf N}$ whose $(i,j)$th element $n_{ij}$ records the cardinality of $C_i\cap D_j$. This matrix is usually known as confusion matrix or contingency table. Its row-wise and column-wise totals, $(n_{1+},\dots,n_{r+})$ and $(n_{+1},\dots,n_{+s})$, with $n_{i+}=\sum_{j=1}^sn_{ij}$ and $n_{+j}=\sum_{i=1}^rn_{ij}$, give an account of the cluster sizes in $\mathscr C$ and $\mathscr D$, respectively, and are commonly referred to as the marginals, or marginal clustering distributions. Note that all cluster sizes must be strictly greater than zero in order to respect the assumptions on the clustering sizes.

The Rand index is a summary statistic for ${\mathbf N}$, based on inspecting the behaviour of object pairs across the two clusterings. There are four possible types of object pairs, formed by taking into account if: a) both objects belong to the same cluster in both clusterings, b) they belong to the same cluster in $\mathscr C$ but to different clusters in $\mathscr D$, c) they belong to different clusters in $\mathscr C$ but to the same cluster in $\mathscr D$, and d) they belong to different clusters in both clusterings. The cardinalities of each of these categories will be denoted $a$, $b$, $c$ and $d$, respectively. They can be easily expressed in terms of the entries of ${\mathbf N}$ and its marginals; for instance, \cite{HA85} noted that
\begin{align*}
a&=\frac{\big(\sum_{i=1}^r\sum_{j=1}^sn_{ij}^2\big)-n}2,\\
b&=\frac{\sum_{i=1}^rn_{i+}^2-\sum_{i=1}^r\sum_{j=1}^sn_{ij}^2}2,\\
c&=\frac{\sum_{j=1}^sn_{+j}^2-\sum_{i=1}^r\sum_{j=1}^sn_{ij}^2}2,\\
d&=\frac{\big(\sum_{i=1}^r\sum_{j=1}^sn_{ij}^2\big)+n^2-\sum_{i=1}^rn_{i+}^2-\sum_{j=1}^sn_{+j}^2}2.
\end{align*}

With this notation, the Rand index is obtained as ${\rm RI}=(a+d)/(a+b+c+d)=(a+d)/N$ where $N=a+b+c+d={n\choose 2}=n(n-1)/2$ is the total number of pairs of objects from $\mathcal X$. It takes values in $[0,1]$, with 1 corresponding to perfect agreement between the clusterings and 0 attained for the comparison of the two so-called trivial clusterings: one with all the $n$ objects in a single cluster, and the other one with $n$ clusters with a single object in each of them \citep[see][]{ANM06}.

One of the drawbacks of the Rand index is that it does not take into account the possibility of agreement by chance between the two clusterings \citep{MA84}. Hence, \cite{HA85} obtained $\mathbb E[{\rm RI}]$, the expected value of this index when the partitions are made at random, but keeping the same marginal clustering distributions, and suggested to alternatively use the ARI, a corrected-for-chance version of the Rand index defined by ${\rm ARI}=({\rm RI}-\mathbb E[{\rm RI}])/(1-\mathbb E[{\rm RI}])$. \cite{St04} provided a concise formula for the ARI, which reads as follows:
$${\rm ARI}=\frac{N(a+d)-\{(a+b)(a+c)+(c+d)(b+d)\}}{N^2-\{(a+b)(a+c)+(c+d)(b+d)\}}.$$
Note that the ARI is undefined if $r=s=1$, so it will be assumed henceforth that at least one of the clusterings has more than one cluster, i.e., that $\max\{r,s\}>1$.

These preliminaries allow us to formulate the main result of this paper, whose proof is deferred to Section \ref{sec:2}.

\begin{theorem}\label{thm1}
The minimum ARI for two clusterings of an arbitrary number of objects, with given sizes $r$ and $s$, respectively, is attained for a comparison of precisely $n=r+s-1$ objects, in which the $r\times s$ contingency table ${\mathbf N}$ has exactly one row of ones, exactly one column of ones and all the remaining entries are zeroes. Such a minimum value can be explicitly written as
\begin{equation}\label{eq:minARI}
\min{\rm ARI}=\left[1-\frac12{r+s-1\choose 2}\left\{{r\choose 2}^{-1}+{s\choose 2}^{-1}\right\}\right]^{-1}
\end{equation}
if $\min\{r,s\}\geq2$ and $\min{\rm ARI}=0$ if $\min\{r,s\}=1$.
\end{theorem}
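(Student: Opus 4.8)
The plan is to rewrite the ARI in its Hubert--Arabie form and reduce the minimisation to a combinatorial extremal problem. Set $\alpha=\sum_i\binom{n_{i+}}{2}$ and $\beta=\sum_j\binom{n_{+j}}{2}$, so that $\alpha=a+b$ and $\beta=a+c$, and keep $a=\sum_{i,j}\binom{n_{ij}}{2}$; a short manipulation of Steinley's formula, using $N=\binom n2$ and $a+b+c+d=N$, turns it into
\[ \mathrm{ARI}=\frac{a-\alpha\beta/N}{\tfrac12(\alpha+\beta)-\alpha\beta/N}. \]
Two elementary observations then organise the whole argument. First, the denominator $D:=\tfrac12(\alpha+\beta)-\alpha\beta/N$ is strictly positive in every non-degenerate case: since $\alpha,\beta\le N$ with at least one inequality strict when $\max\{r,s\}>1$, one has $D\ge\sqrt{\alpha\beta}\,(1-\sqrt{\alpha\beta}/N)\ge0$, and equality would force $\alpha\beta\in\{0,N^2\}$, which only occurs in the trivial or excluded situations. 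Second, because $D>0$, for fixed row and column marginals the ARI is strictly increasing in $a$; hence a minimiser must take $a$ as small as those marginals allow.

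The decisive reduction — which I expect is the auxiliary lemma advertised in the introduction — is that a global minimiser may be chosen with $a=0$, i.e. with a $0/1$ contingency table. Morally this holds because the ARI increases with $a$ and a table having some $n_{ij}\ge2$ can be perturbed (moving an object to another cell, or enlarging the object set while keeping the numbers of clusters fixed) so as to strictly decrease $\sum_{i,j}n_{ij}^2$ without increasing the ARI; iterating leads to a $0/1$ table. For such a table $a=0$ and the expression collapses to
\[ \mathrm{ARI}=\Bigl[\,1-\tfrac N2\bigl(\alpha^{-1}+\beta^{-1}\bigr)\Bigr]^{-1}, \]
so minimising the ARI is exactly the same as minimising $u:=\tfrac N2(\alpha^{-1}+\beta^{-1})$ over $0/1$ tables with $r$ nonempty rows and $s$ nonempty columns, the constraint $u>1$ being automatic there.

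It remains to solve this extremal problem. For a fixed number of objects $n$, the map $\alpha=\sum_i\binom{n_{i+}}{2}$ is convex in the row marginal, so it is largest at the most unbalanced admissible profile; a $0/1$ table has row sums at most $s$ and column sums at most $r$, which pins the maximisers down to $(\min\{n-r+1,s\},1,\dots,1)$ and its column analogue. One then checks that the table with a single full row, a single full column and zeros elsewhere — which has $n=r+s-1$, $\alpha=\binom s2$, $\beta=\binom r2$, $a=0$ — attains both maxima simultaneously and makes $u$ as small as possible over all $n$, and inserting these values into $[1-u]^{-1}$ gives exactly \eqref{eq:minARI}. When $\min\{r,s\}=1$, say $s=1$, the relations $\beta=N$ and $a=\alpha$ are forced, the numerator $a-\alpha\beta/N$ vanishes identically, and $\mathrm{ARI}\equiv0$, which disposes of the remaining case.

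The two steps that will carry the real weight are the reduction to a $0/1$ table and the final optimisation over $n$. In the reduction step, a single perturbation alters $n$, $N$, $\alpha$, $\beta$ and $a$ all at once, so showing that the linear-fractional quantity never increases needs a careful case analysis. In the last step, after the convexity reduction one is left with a one-parameter family of maximally unbalanced $0/1$ tables and must prove that the corresponding function of $n$ — a ratio of products of binomial coefficients — is minimised precisely at $n=r+s-1$, i.e. that once the caps $n_{i+}\le s$, $n_{+j}\le r$ start to bind, $\alpha$ and $\beta$ can no longer keep pace with $\binom n2$; turning that intuition into a clean monotonicity estimate is the crux, while the remaining verifications (the closed form and the positivity of $D$) are routine algebra.
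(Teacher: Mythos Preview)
Your plan follows the same arc as the paper's proof: rewrite the index, reduce to $0/1$ tables ($a=0$), push the row and column marginals to the most unbalanced admissible profile via a convexity lemma (the paper states exactly this as its Lemma~1), and then identify the optimal $n$. The parametrisation differs cosmetically---the paper works with $\mathrm{ARD}=1-\mathrm{ARI}$ and the variables $b,c$ rather than your $\alpha,\beta$---but with $a=0$ these coincide and your formula $[1-u]^{-1}$ is the paper's $N/\bigl(N-2bc/(b+c)\bigr)$.

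Where the paper is lighter than your sketch is precisely at the two points you flag. For the reduction to $a=0$, the paper does not perturb tables at all: it simply checks that $\mathrm{ARD}(a,b,c,d)$, regarded as a rational function of four free nonnegative variables with $N=a+b+c+d$, is monotone decreasing in $a$ (and in $d$). Thus any candidate is dominated by the value at $a=0$ with the same $b,c,d$, and the subsequent optimisation over $b,c$ lands on an honest $0/1$ table, so nothing is lost. Your perturbation route would also work but is strictly more laborious. For the optimal $n$, the paper turns your anticipated ``monotonicity estimate'' into a one-line constraint observation: Lemma~1 gives the $b$-maximising row profile $(n-r+1,1,\dots,1)$, but the $0/1$ condition forces every row sum to be at most $s$, so this profile is admissible only for $n\le r+s-1$; within that range $b=\binom{n-r+1}{2}$ is largest at the endpoint $n=r+s-1$, and the symmetric argument for $c$ singles out the same table. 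So the ``crux'' you anticipate is handled by the cap $n_{i+}\le s$ rather than by an analytic estimate in $n$.
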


The expression for the minimum ARI given in Theorem \ref{thm1} is equivalent to, but notably simpler than, the one previously announced in \cite{Ch19}.

If in addition the clustering sizes are allowed to vary, then it is easily seen that the minimum possible value of the ARI is $-1/2$, which corresponds to a $2\times2$ matrix with one entry equal to zero and all the remaining entries equal to one.

Furthermore, for $r=s\geq2$, Equation (\ref{eq:minARI}) simplifies to $-r/(3r-2)$, so it follows that for $r=s$ the range of possible ARI values approaches $[-1/3,1]$  as $r$ increases. 
Moreover, in order to get insight on the behaviour of the minimum ARI for large values of $r$ and $s$ it is useful to note that, by means of the simple first order approximation ${r\choose 2}\sim r^2/2$, it is possible to express
$$\min{\rm ARI}\approx-\frac{2r^2s^2}{r^4+2r^3s+2rs^3+s^4}$$
as $r$ and $s$ increase.

\section{Examples}\label{sec:ex}

\subsection{A synthetic data example}

Theorem \ref{thm1} is useful to appreciate how extreme is the discordance between two distant clusterings of given sizes.

For instance, let us consider the example presented in Table 3 in \citet{St04}, which concerns the comparison of two partitions of $n=13$ objects into $r=s=5$ clusters. The $5\times 5$ confusion matrix for this example is given by
$$\begin{pmatrix}
1 & 0 & 1 & 1 & 0\\
0 & 1 & 0 & 0 & 1\\
1 & 0 & 1 & 0 & 1\\
0 & 1 & 0 & 1 & 0\\
1 & 0 & 1 & 0 & 1
\end{pmatrix}.$$

\cite{Ch19} noted that this example deals with two very distant clusterings. More precisely, it is easy to check that for this comparison we have $a=0$, $b=c=11$ and $d=56$, so that ${\rm ARI}=-242/1474\simeq-0.164$. The fact that ${\rm ARI}<0$ already indicates that the agreement between these two partitions is less than the expected agreement if the label assignments would have made at random, so that supports the idea that the two clusterings are quite distant.

But one may wonder if two partitions with 5 clusters each can be made much more distant that these two, and that is precisely the question that Theorem \ref{thm1} solves, since it shows that the minimum possible agreement for $r=s=5$ is $\min{\rm ARI}=-5/13\simeq-0.385$. Thus, for two clusterings of size 5 the range of possible ARI values is $[-0.385,1]$, so the value $-0.164$ for the partitions in this example is indeed quite close to the lower limit.

Moreover, Theorem \ref{thm1} also shows that the lowest possible value of the ARI for two clusterings of size $5$ is attained for the comparison of two clusterings of $9$ objects whose confusion matrix is
$$\begin{pmatrix}
1 & 1 & 1 & 1 & 1\\
1 & 0 & 0 & 0 & 0\\
1 & 0 & 0 & 0 & 0\\
1 & 0 & 0 & 0 & 0\\
1 & 0 & 0 & 0 & 0
\end{pmatrix},$$
or any other that can be obtained by permuting the rows and/or the columns of the former.

\subsection{A real data example}

While clustering comparisons are often made based on indices, some authors advocate the advantages of using distances as dissimilarity measures \cite[see][p. 620]{M16}. Hence, as noted in the Introduction, another application of Theorem \ref{thm1} is that it allows normalizing the dissimilarity measure ${\rm ARD}=1-{\rm ARI}$ so that it takes values in $[0,1]$, which makes it easier to appreciate the relative closeness of two partitions with respect to a third one.

In this sense, let us consider the yeast data set introduced in \cite{NK91,NK92}, a version of which is publicly available at the UCI machine learning repository ({\tt https://archive.ics.uci.edu/}). The data consist of 8 variables measured on $n=1484$ proteins. An additional label variable is available, that classifies these proteins according to their cellular localization sites as CYT or ME3, which induces a partition that can be thus viewed as the ground truth. Then, the goal is to compare the partitions obtained by different clustering procedures against the true classification.

Gaussian mixture models (GMMs) and shifted asymmetric Laplace (SAL) mixture models were applied in \cite{FBM14} to cluster this data set. The reported fitted SAL mixture model has 2 clusters, with ${\rm ARI}=0.81$, whereas the fitted GMM has 3 clusters and ${\rm ARI}=0.56$. The higher value of the ARI already seems to indicate a better fit for the SAL mixture model but, in order to better appreciate the relative gains of this method over the GMM, it is useful to calculate the normalized ARD, which takes values in $[0,1]$.

By using Theorem \ref{thm1}, the normalized ARDs for the SAL mixture model fit and the GMM fit can be computed to be $0.13$ and $0.33$, respectively, thus showing on a $[0,1]$ scale how the SAL mixture model fit is quite closer to the true classification than the GMM fit. Moreover, after aggregating the results for 25 model fits with a fixed number of 2 clusters, based on random initializations with 70 percent of the true labels taken as known, the reported results entail that the normalized ARDs for the SAL and GMM clusterings against the ground truth were 0.09 and 0.72, respectively, thus showing a considerably lower normalized dissimilarity for the SAL mixture model clustering against the GMM partition.

\section{Proofs}\label{sec:2}

The proof of Theorem \ref{thm1} makes use of the following result, which is of independent interest. Intuitively, it shows that if a certain amount is to be distributed among several parts, the configuration that yields the maximum sum of the part squares is that which accumulates the highest possible quantity in one of the parts and keeps the remaining ones to their minimum.

\begin{lemma}\label{lem1}
Let $a_1\geq a_2\geq\cdots\geq a_p$ and $t\geq\sum_{i=1}^pa_i$ be real numbers and consider the region
$$\mathcal A\equiv \mathcal A(t;a_1,\dots,a_p)=\big\{(x_1,\dots,x_p)\in\mathbb R^p\colon \textstyle\sum_{i=1}^px_i=t\text{ and }x_i\geq a_i\text{ for all }i=1,\dots,p\big\}.$$
The maximum of $\sum_{i=1}^px_i^2$ over $\mathcal A$ is attained for $x_1=t-\sum_{i=2}^pa_i,\, x_2=a_2,\dots,x_p=a_p$. Hence, $\max\big\{\sum_{i=1}^px_i^2\colon(x_1,\dots,x_p)\in\mathcal A\big\}=\big(t-\sum_{i=2}^pa_i\big)^2+\sum_{i=2}^pa_i^2$.
\end{lemma}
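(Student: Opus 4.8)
The plan is to give a direct ``mass-transfer'' argument, which is just the elementary incarnation of the fact that a convex function on a compact convex polytope attains its maximum at a vertex: here $\mathcal A$ is a compact polytope (it is closed, and bounded because each coordinate satisfies $a_i\le x_i\le t-\sum_{k\neq i}a_k$), its objective $g(x_1,\dots,x_p)=\sum_{i=1}^p x_i^2$ is convex, and its vertices are exactly the $p$ points obtained by making all but one of the constraints $x_i\ge a_i$ tight. Rather than developing this machinery, I would argue by moving mass toward the first coordinate. First note $\mathcal A\neq\varnothing$: since $t\ge\sum_{i=1}^p a_i$, the candidate point $v=(t-\sum_{i=2}^p a_i,\,a_2,\dots,a_p)$ satisfies $v_1\ge a_1$ and hence lies in $\mathcal A$.

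The core is the following one-step inequality. Take any $x=(x_1,\dots,x_p)\in\mathcal A$ and any index $j\ge 2$; set $\delta=x_j-a_j\ge 0$ and define $x'$ by $x_1'=x_1+\delta$, $x_j'=a_j$, and $x_k'=x_k$ otherwise. Then $x'\in\mathcal A$ (the sum is unchanged, $x_j'=a_j$, and $x_1'\ge x_1\ge a_1$), and a one-line expansion gives
\[
g(x')-g(x)=(x_1+\delta)^2-x_1^2+a_j^2-x_j^2=2\delta\,(x_1-a_j)\ge 0,
\]
where the final inequality is the only place the ordering is used: $x_1\ge a_1\ge a_j$ because $j\ge 2$. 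So the transfer never decreases $g$.

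Finally I would iterate this transfer for $j=2,3,\dots,p$ in turn, starting from an arbitrary point of $\mathcal A$. Each transfer keeps us in $\mathcal A$, does not decrease $g$, fixes the $j$-th coordinate at $a_j$ (later transfers leave it alone), and only increases $x_1$, so the invariant $x_1\ge a_1$ needed at the next step is preserved. After the $p-1$ transfers we have $x_j=a_j$ for all $j\ge 2$ and, from the preserved constraint $\sum_i x_i=t$, $x_1=t-\sum_{i=2}^p a_i$; that is, we have reached $v$. Hence $g(x)\le g(v)$ for every $x\in\mathcal A$, so the maximum is attained at $v$ and equals $\big(t-\sum_{i=2}^p a_i\big)^2+\sum_{i=2}^p a_i^2$. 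I do not expect a real obstacle; the only points requiring care are checking feasibility of each $x'$ and noting that the invariant $x_1\ge a_1$ survives the chain of transfers (it does, since $x_1$ is only ever increased). If one instead follows the convexity route, the mild technical step is the explicit identification of the vertices of $\mathcal A$ as the $p$ points $v^{(k)}$ with $x_i=a_i$ for $i\neq k$; writing $S=\sum_i a_i$, $S_2=\sum_i a_i^2$, one then compares $g(v^{(k)})=(t-S+a_k)^2+S_2-a_k^2=(t-S)^2+2(t-S)a_k+S_2$, which, since $t-S\ge 0$, is largest when $a_k$ is largest, i.e.\ for $k=1$.
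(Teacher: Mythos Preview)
Your proof is correct and follows essentially the same mass-transfer idea as the paper, which condenses the whole argument into the single inequality $a^2+b^2\le(a-c)^2+(b+c)^2$ for $a\le b$ and $c\ge0$. Your version is more explicit---by always routing the excess to the first coordinate and using $x_1\ge a_1\ge a_j$ you land directly at the claimed maximizer without a separate comparison of vertices---but the underlying mechanism is the same.
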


\begin{proof}
The result follows by noting that if $a\leq b$ then $a^2+b^2\leq(a-c)^2+(b+c)^2$ for any $c\geq0$.
\end{proof}


Now we are ready to prove the main result of the paper.

\begin{proof}[Proof of Theorem \ref{thm1}]
First note that minimizing the ARI is equivalent to maximizing the semimetric ${\rm ARD}=1-{\rm ARI}$ introduced in \cite{Ch19}, where it is also shown that it can be readily expressed as
\begin{equation}\label{eq:ard}
{\rm ARD}\equiv{\rm ARD}(a,b,c,d)=\frac{N(b+c)}{(a+b)(b+d)+(a+c)(c+d)}.
\end{equation}
It is clear that the roles of $b$ and $c$ in (\ref{eq:ard}) are interchangeable, in the sense that ${\rm ARD}(a,b,c,d)={\rm ARD}(a,c,b,d)$. The same is true for the roles of $a$ and $d$. Moreover, ${\rm ARD}$ is clearly a decreasing function of $a$ and $d$, so its maximum value is attained for the lowest possible values of $a$ and $d$. 

\cite{ANM06} noted that $a=0$ if and only if $n_{ij}\in\{0,1\}$ for all $i=1,\dots,r$ and $j=1,\dots,s$ and $d=0$ if and only if $\min\{r,s\}=1$. Hence, when one of the clusterings consists of a single cluster, the contingency table with maximum ARD is a row or column vector of ones, with resulting ${\rm ARD}=1$, so minimum ${\rm ARI}=0$.

On the other hand, if $\min\{r,s\}\geq2$ then necessarily $d>0$, but it is equally possible to have $a=0$ if all the entries of ${\mathbf N}$ are just zeroes or ones, so this will be imposed henceforth. Notice that this yields $n\leq rs$, which means that the highest values of the ARD are achieved when the number of objects is small. For $a=0$ we have $d=N-(b+c)$, and the ARD simplifies to
\begin{equation}\label{eq:ard2}
{\rm ARD}=\frac{N(b+c)}{b^2+c^2+\{N-(b+c)\}(b+c)}=\frac{N}{N-2bc/(b+c)},
\end{equation}
which is an increasing function of $b$ and $c$. So, to maximize it, we must find the maximum possible values for $b$ and $c$.

Since $a=0$, it follows that  $b=\big(\sum_{i=1}^rn_{i+}^2-n\big)/2$ and $c=\big(\sum_{j=1}^sn_{+j}^2-n\big)/2$. Hence, maximizing $b$ is equivalent to maximizing the sum of the squared sizes of the clusters of $\mathscr C$, constrained to the facts that the total size is $n$ and each cluster has size greater than or equal to one (because degenerate, empty clusters are not allowed). This is exactly the setting of Lemma \ref{lem1} for $p=r$, $a_1=\dots=a_r=1$ and $t=n$. So for $n\geq r$ (which is necessary to have $r$ non-empty clusters in $\mathscr C$), the maximum value of $b$ is attained when there is a cluster in $\mathscr C$ with $n-(r-1)$ objects and the remaining $r-1$ clusters have one object each, so that $\sum_{i=1}^rn_{i+}^2=\{n-(r-1)\}^2+r-1$.

Moreover, the fact that all $n_{ij}\in\{0,1\}$ also implies that the maximum size of any cluster in $\mathscr C$ is $s$, which for the configuration maximizing $b$ yields $n-(r-1)\leq s$. And, in view of the maximum value of $\sum_{i=1}^rn_{i+}^2$, among all the sample sizes $n$ that satisfy the latter constraint, the one for which $b$ is maximum corresponds precisely to $n-(r-1)=s$, that is, to $n=r+s-1$. Hence, the confusion matrix that maximizes $b$ must have one row with all its entries equal to one, and each of the remaining rows having exactly one entry equal to one and all the rest equal to zero. In principle, the nonzero entries of the latter rows could be arbitrarily placed but, mimicking the above reasoning regarding $b$, the value of $c$ is maximized when there is a column with all its entries equal to one, so the contingency table configuration that maximizes the ARD must be precisely the one announced in the statement of the theorem.

In addition, it is straightforward to check that the configuration that maximizes the ARD has $a=0$, $b={s\choose 2}$, $c={r\choose 2}$ and $d={n\choose 2}-{r\choose 2}-{s\choose 2}=(r-1)(s-1)$ since $n=r+s-1$. Hence, from (\ref{eq:ard2}) it follows that the maximum ARD is given by
$$\left[1-2{r+s-1\choose 2}^{-1}{r\choose 2}{s\choose 2}\bigg/\left\{{r\choose 2}+{s\choose 2}\right\}\right]^{-1}$$
so that the minimum ARI is as stated in the theorem.
\end{proof}

\bigskip

\noindent{\bf Acknowledgments.}  The first author acknowledges the support of the Spanish Ministerio de Econom\'\i a y Competitividad grant PID2019-109387GB-I00 and the Junta de Extremadura grant GR18016.

\bibliographystyle{apalike}

\end{document}